\newtheorem{dfn}{Definition}
\newtheorem{thm}{Theorem}
\newtheorem{pro}{Proposition}
\newcommand{\stochsim}{\ensuremath{\sim}}
\newcommand{\stoch}{\ensuremath{{\sim}}}
\newcommand{\soft}{\ensuremath{{\overset{\text{\tiny{soft}}}{\sim}}}}
\newcommand{\var}[1]{{\operatorname{\mathit{#1}}}}
\newcommand{\appropto}{\mathrel{\vcenter{
  \offinterlineskip\halign{\hfil$##$\cr
    \propto\cr\noalign{\kern1pt}\sim\cr\noalign{\kern-1pt}}}}}
\lstdefinelanguage{Golang}%
  {morekeywords=[1]{package,import,func,type,struct,return,defer,panic,%
     recover,select,var,const,iota,},%
   morekeywords=[2]{string,uint,uint8,uint16,uint32,uint64,int,int8,int16,%
     int32,int64,bool,float,float32,float,complex64,complex128,byte,rune,uintptr,%
     error,interface},%
   morekeywords=[3]{map,slice,make,new,nil,len,cap,copy,close,true,false,%
     delete,append,real,imag,complex,chan,},%
   morekeywords=[4]{for,break,continue,range,go,goto,switch,case,fallthrough,if,%
     else,default,},%
   morekeywords=[5]{Println,Printf,Error,Print,},%
   sensitive=true,%
   morecomment=[l]{//},%
   morecomment=[s]{p*}{*/},%
   morestring=[b]',%
   morestring=[b]",%
   morestring=[s]{`}{`},%
}
\begin{document}

\twocolumn[

\icmltitle{Probabilistic Programs with Stochastic Conditioning}

\begin{icmlauthorlist}
\icmlauthor{David Tolpin}{bgu}
\icmlauthor{Yuan Zhou}{ox}
\icmlauthor{Tom Rainforth}{ox}
\icmlauthor{Hongseok Yang}{kaist}
\end{icmlauthorlist}

\icmlaffiliation{bgu}{Ben-Gurion University of the Negev}
\icmlaffiliation{ox}{University of Oxford}
\icmlaffiliation{kaist}{School of Computing, KAIST}

\icmlcorrespondingauthor{David Tolpin}{david.tolpin@gmail.com}

\icmlkeywords{probabilistic programming, stochastic conditioning}

\vskip 0.3in

]

\printAffiliationsAndNotice{}  

\begin{abstract}
	We tackle the problem of conditioning probabilistic
	programs on distributions of observable variables.
	Probabilistic programs are usually conditioned on samples
	from the joint data distribution, which we refer to as deterministic conditioning. 
	However, in many real-life
	scenarios, the observations are given as marginal
	distributions, summary statistics, or samplers.
	Conventional probabilistic programming systems lack adequate
	means for modeling and inference in such scenarios. 
	We propose a generalization of deterministic conditioning 
	to \emph{stochastic conditioning}, 
	that is,  
	conditioning on the marginal distribution of a
	variable taking a particular form. 
	To this end, we first define the formal notion of stochastic conditioning 
	and discuss its key properties. 
	We then show how to perform inference in the presence of stochastic conditioning.
	We demonstrate potential usage of stochastic conditioning on 
	several case studies which involve various 
	kinds of stochastic conditioning and are difficult to
	solve otherwise. 
	Although we present stochastic
	conditioning in the context of probabilistic programming,
	our formalization is general and applicable to other settings. 
\end{abstract}

\section{Introduction}

Probabilistic programs implement statistical models. Mostly,
probabilistic programming closely follows the Bayesian
approach~\citep{KP83,GCS+13}: a prior distribution is imposed on latent
random variables, and the posterior distribution is conditioned
on observations (data). The conditioning may take the form of a
hard constraint (a random variate must have a particular value, 
e.g. as in Church~\citep{GMR+08}), but a more common practice is
that conditioning is soft --- the observation is assumed to come
from a distribution, and the probability (mass or density) of
the observation given the distribution is used in
inference~\citep{Stan17,GS15,TMY+16,GXG18}. In the standard setting,
the conditioning is \textit{deterministic} --- observations are fixed
samples from the joint data distribution.  For example, in the
model of an intensive care unit patient, an observation may be a
vector of vital sign readings at a given time.  This setting is
well-researched, widely applicable, and robust inference is
possible~\citep{HG11,WVM14,TFP18,BCJ+19}.

However, instead of samples from the joint data distribution,
observations may be independent samples from marginal data
distributions of observable variables, summary statistics,
or even data distributions themselves, provided in closed form or as
samplers. These cases naturally appear in real life scenarios:
samples from marginal distributions arise when different
observations are collected by different parties, summary
statistics are often used to represent data
about a large population, and data distributions may express
uncertainty during inference about future states of the world,
e.g. in planning. Consider the following situations:
\begin{compactitem}
	\item A study is performed in a hospital on a group of
		patients carrying a certain disease. To preserve the
		patients' privacy, the information is collected and
		presented as summary statistics of each of the monitored
		symptoms, such that only marginal distributions of the
		symptoms are approximately characterized. It would be
		natural to condition the model on a combination of
		symptoms, but such combinations are not observable.
	\item A traveller regularly drives between two cities and wants to
		minimize the time this takes. However, some road sections may be closed 
		due to bad weather, which can
		only be discovered at a crossing adjacent to the road  section. 
		A policy that minimizes average travel time,
		given the probabilities of each road closure,
		is required, but finding this policy requires us to condition on the
		\textit{distribution} of states~\citep{PY89}.
\end{compactitem}

Most existing probabilistic programming systems, which condition on
samples from the joint data distribution, cannot be directly applied to
such scenarios for either model specification or performing inference.  
In principle, such models can be expressed as nested
probabilistic programs~\citep{R18}. 
However, inference in such
programs has limited choice of algorithms, is computationally
expensive, and is difficult to implement correctly~\citep{RCY+18}.

In some specific settings, models can be augmented with additional
auxiliary information, and custom inference techniques can be used. 
For example, \citet{MPT+16} employs black-box variational inference
on augmented probabilistic programs for policy search.
But problem-specific
program augmentation and custom inference compromise the core
promise of probabilistic programming: programs and algorithms
should be separated, and off-the-shelf inference methods 
should be applicable to a wide range of programs in an
automated, black-box manner~\citep{WVM14,T19,WBD+19}. 

To address these issues and provide a general solution to defining models and running inference
for such problems, we propose a way to extend
\textit{deterministic conditioning} $p(x\vert y=y_0)$, i.e.~conditioning on some random
variable in our program $y$ taking on a particular value $y_0$,
to \textit{stochastic conditioning} $p(x\vert y \stoch D_0)$, i.e.
conditioning on $y$ having the marginal distribution $D_0$.
In the context of a higher-order sampling process in which we first sample a random probability measure $\mathbf{D}\sim p(\mathbf{D})$ and then sample a random variable $y \sim \mathbf{D}$,  stochastic conditioning $p(x\vert y \stoch D_0)$ amounts to conditioning on the event $\mathbf{D}=D_0$,
that is on the random measure $\mathbf{D}$ itself taking the particular form $D_0$.
Equivalently, we can think on conditioning on the event $y \sim D_0$ (based on the first step of our sampling process), which says that the marginal distribution of $y$ is given by the distribution~$D_0$. 
We can develop intuition for this by considering the special case of a discrete $y$, where
$y \sim D_0$ means that the proportion of each possible instance of $y$ that occurs will be $D_0$ if we conduct an infinite number of rollouts and sample a value of $y$ for each.  

To realize this intuition, we formalize stochastic conditioning and analyze its properties and usage in the context of probabilistic programming,
further showing how effective automated inference engines can be set up
for the resulting models.
We note that our results also address a basic conceptual problem in Bayesian modeling,
 and are thus applicable to non-probabilistic programming settings as well. 

We start with an informal introduction providing
intuition about stochastic conditioning
(Section~\ref{sec:intuition}).  Then, we define the notion of
stochastic conditioning formally and discuss its key
properties (Section~\ref{sec:stochastic-conditioning}), 
comparing our definition with possible alternatives
and related concepts.  Following that, we discuss efficient
inference for programs with stochastic conditioning
(Section~\ref{sec:inference}).  In case studies
(Section~\ref{sec:case-studies}), we provide probabilistic
programs for several problems of statistical inference which are
difficult to approach otherwise, perform inference on the
programs, and analyze the results.  

\section{Intuition}
\label{sec:intuition}

To get an intuition behind stochastic conditioning, we take
a fresh look at the Beta-Bernoulli generative model:
\begin{equation}
	\begin{aligned}
		x & \sim \mathrm{Beta}(\alpha, \beta), 
                &
		y & \sim \mathrm{Bernoulli}(x).
	\end{aligned}
	\label{eqn:beta-bernoulli}
\end{equation}
The $\mathrm{Beta}$ prior on $x$ has $\alpha$
and $\beta$ parameters, which are interpreted as the belief about the
number of times $y{=}1$ and $y{=}0$ seen before. Since $\mathrm{Beta}$ is the conjugate prior
for $\mathrm{Bernoulli}$, belief updating
in~\eqref{eqn:beta-bernoulli} can be performed analytically:
\begin{equation*}
		x\vert y \sim \mathrm{Beta}(\alpha + y, \beta + 1 - y)
	\label{eqn:beta-bernoulli-x-given-y}
\end{equation*}
We can compose Bayesian belief updating. If after observing
$y$ we observed $y'$, then\footnote{By $y' \circ y$ we denote that
$y'$ was observed after observing $y$ and updating the belief
about the distribution of $x$.}
\begin{equation*}
	x\vert y' \circ y \sim \mathrm{Beta}(\alpha + y + y', \beta + 2 - y - y').
	\label{eqn:beta-bernoulli-x-given-yy}
\end{equation*}
In general, if we observe $y_{1:n} = y_n \circ ... \circ y_2 \circ y_1$, then
\begin{equation*}
        x\vert y_{1:n} \sim \mathrm{Beta}\Big(\alpha + \sum\nolimits_{i=1}^n y_i, \beta + n - \sum\nolimits_{i=1}^n y_i\Big).
	\label{eqn:beta-bernoulli-x-given-ys}
\end{equation*}
In \eqref{eqn:beta-bernoulli} (also in many more general
exchangeable settings) belief updating is commutative ---
the posterior distribution does not depend on the order of
observations. One may view $y_{1:n}$ as a multiset, rather than a
sequence, of observations. 

Let us now modify the procedure of presenting the evidence.
Instead of observing the value of each of $y_n \circ .... \circ y_2
\circ y_1$ in order, we just observe $n$ variates, of which
$k=\sum_{i=1}^n y_i$ variates have value $1$ (but we are
not told which ones). It does not matter which of the
observations are $1$ and which are $0$. We can even stretch the
notion of a single observation and say that it is, informally, a
`combination' of 1 with probability $\theta = \frac k n$ and 0
with probability $1 - \theta$. In other words, we can view each
observation $y_i$ as an observation of distribution
$\mathrm{Bernoulli}(\theta)$ itself; the posterior distribution of $x$
given $n$ observations of $\mathrm{Bernoulli}(\theta)$ should be
the same as the posterior distribution of $x$ given $y_{1:n}$.  This extended interpretation of belief updating
based on observing distributions lets us answer questions about
the posterior of $x$ given that we observe \textit{the
distribution} $\mathrm{Bernoulli}(\theta)$ of $y$:
\begin{equation*}
	x|\big(y \stoch \mathrm{Bernoulli}(\theta)\big) \sim \mathrm{Beta}(\alpha + \theta, \beta + 1 - \theta).
	\label{eqn:beta-bernoulli-x-given-bernoulli}
\end{equation*}
Note that observing a distribution does not imply observing its
parametric representation.  One may also observe a
distribution through a random source of samples, a black-box
unnormalized density function, or summary statistics.

Commonly, probabilistic programming involves weighing 
different assignments to $x$ by the conditional probability
of $y$ given $x$. For model~\eqref{eqn:beta-bernoulli},
\begin{equation*}
		p(y|x)  = x^y(1-x)^{1-y}.
\end{equation*}
The conditional probability of observing a fixed value extends
naturally to observing a distribution:
\begin{align}
		\label{eqn:beta-bernoulli-pD-given-x}
		\nonumber
		p&(y \stoch \mathrm{Bernoulli}(\theta)|x) = x^\theta (1-x)^{1-\theta} \\
		 & = \exp \left(\theta \log x + (1 - \theta) \log (1-x) \right) \\ \nonumber
         & = \exp \Big(\sum\nolimits_{y \in \{0,1\}} p_{\mathrm{Bern}(\theta)}(y) \cdot \log p_{\mathrm{Bern}(x)}(y)\Big)
\end{align}
where $p_{\mathrm{Bern}(r)}(y)$ is the probability mass function
of the distribution $\mathrm{Bernoulli}(r)$ evaluated at $y$. 
Note that $p_{\mathrm{Bern}(x)}$ inside the log is precisely the
conditional probability of $y$ in model \eqref{eqn:beta-bernoulli}.
Equation~\eqref{eqn:beta-bernoulli-pD-given-x} lets us specify a
probabilistic program for a version of
model~\eqref{eqn:beta-bernoulli} with stochastic conditioning
--- on a distribution rather than on a value. In the next
section, we introduce stochastic conditioning formally, using a
general form of~\eqref{eqn:beta-bernoulli-pD-given-x}.

\section{Stochastic Conditioning}
\label{sec:stochastic-conditioning}


Let us define stochastic conditioning formally. In what follows, we mostly discuss the
continuous case where the observed distribution $D$ has a density $q$. For the case that $D$
does not have a density, the notation $q(y)dy$ in our discussion should be replaced
with $D(dy)$, which means the Lebesgue integral with respect to the distribution (or probability measure)
$D$. For the discrete case, probability densities should be
replaced with probability masses, and integrals with sums. Modulo these changes, all the theorem and
propositions in this section carry over to the discrete case. A general measure-theoretic 
formalization of stochastic conditioning, which covers all of these cases uniformly, is described in 
Appendix~\ref{sec:stochastic-conditioning-generalization}.

\begin{dfn} 
	A probabilistic model with stochastic conditioning is a tuple $(p(x, y), D)$ where 
	(i) $p(x, y)$ is the joint probability density of random variable $x$ and observation $y$, and it is factored into
	the product of the prior $p(x)$ and the likelihood $p(y|x)$ (i.e., $p(x, y)=p(x)p(y|x)$);
	(ii) $D$  is the distribution from which observation $y$ is marginally sampled, and it has a density $q(y)$.
	\label{dfn:stochastic-conditioning}
\end{dfn}

Unlike in the usual setting, our objective is to infer $p(x|y \stoch D)$, the
distribution of $x$ given \textit{distribution} $D$, rather than an
individual observation $y$.  To accomplish this objective, we need to be
able to compute $p(x, y \stoch D)$, a possibly unnormalized density on $x$ and distribution $D$. 
We define $p(x, y \stoch D) = p(x)p(y \stoch D|x)$ where $p(y \stoch D|x)$ is the following unnormalized conditional density:
\begin{dfn}
	The (unnormalized) conditional density $p(y \stoch D|x)$ of $D$ given $x$ is
	\begin{equation}
		p(y \stoch D|x) = \exp \left( \int_Y (\log p(y|x))\,q(y)dy \right)
		\label{eqn:prob-D-given-x0}
	\end{equation}
	\label{dfn:prob-D-given-x}
	where $q$ is the density of $D$.
\end{dfn}

An intuition behind the definition can be seen by rewriting~\eqref{eqn:prob-D-given-x0} as a type II geometric integral:
\begin{equation}
	\label{eqn:geometric}
	\nonumber p(y \stoch D|x) = \prod\nolimits_Y p(y|x)^{q(y)dy}.
\end{equation}
Definition~\ref{dfn:prob-D-given-x} hence can be interpreted as the
probability of observing \textit{all} possible draws of $y$ from $D$,
each occurring according to its probability $q(y)dy$.

At this point, the reader may wonder why we do not take the following alternative, 
frequently coming up in discussions:
\begin{equation}
	p_1(y \stoch D|x) = \int_Y \,p(y|x)\, q(y)dy.
	\label{eqn:alternative-stochastic-conditioning}
\end{equation}
One may even see a connection
between~\eqref{eqn:alternative-stochastic-conditioning} and
Jeffrey's soft evidence~\cite{J90}
\begin{equation}
	p(x|y \soft D) = \int_Y q(y) p(x|y)\, dy, 
	\label{eqn:jeffreys-soft-evidence}
\end{equation}
although the latter addresses a different setting. In soft
evidence, an observation is a single value $y$, but the observer
does not know with certainty which of the values was observed.
Any value $y$ from $Y$, the domain of $D$, can be observed with
probability $q(y)$, but $p(y \soft D|x)$ cannot be
generally defined~\cite{CD03,BDP+13}. In our setting,
distribution $D$ is observed, and the observation is certain.

We have two reasons to
prefer~\eqref{eqn:prob-D-given-x0} to~\eqref{eqn:alternative-stochastic-conditioning}. First, as Proposition~\ref{pro:max-likelihood} will explain, our $p(y \stoch D|x)$ 
is closely related to the KL divergence between $q(y)$ and
$p(y|x)$, while the alternative $p_1(y \stoch D|x)$ in \eqref{eqn:alternative-stochastic-conditioning} lacks such
connection. The connection helps understand how $p(y \stoch D|x)$ alters the prior of $x$. Second, $p(y \stoch D|x)$ treats all possible
draws of $y$ more equally than $p_1(y \stoch D|x)$ in the
following sense. Both $p(y \stoch D|x)$ and $p_1(y \stoch D|x)$ are instances
of so called power mean~\cite{B03} defined by 
\begin{equation}
	p_\alpha(y \stoch D|x) = \left(\int_Y \,p(y|x)^\alpha\, q(y)dy\right)^{\frac{1}{\alpha}}.
\end{equation}
Setting $\alpha$ to $0$ and $1$ gives our $p(y \stoch D|x)$ and the alternative
$p_1(y \stoch D|x)$, respectively. A general property of this power mean
is that as $\alpha$ tends to $\infty$, draws of $y$ with large
$p(y|x)$ contribute more to $p_\alpha(y \stoch D|x)$, and 
the opposite situation happens as $\alpha$ tends to $-\infty$. Thus, the $\alpha = 0$ case, which gives our $p(y \stoch D|x)$, can 
be regarded as the option that is the least sensitive to $p(y|x)$.

\begin{pro} 
        For a given $D$ with density $q$, 
	\begin{equation}
		\arg \max_x p(y \stoch D|x) = \arg \min_x \mathrm{KL}(q(y)||p(y|x)).
		\label{eqn:prop-1}
	\end{equation}
	 In particular, if there exists $x^*$ such that
	$p(y|x^*) \equiv q(y)$, then $x^* = \arg \max_x p(y \stoch D|x)$.
	\label{pro:max-likelihood}
\end{pro}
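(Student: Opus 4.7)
The plan is to reduce the claim to a direct manipulation of the defining expression for $p(y \stoch D|x)$ together with the standard decomposition of KL divergence. Since the exponential is strictly monotonic, the argmax of $p(y \stoch D|x)$ in $x$ equals the argmax of its logarithm, and by Definition~\ref{dfn:prob-D-given-x}
\begin{equation*}
\log p(y \stoch D|x) = \int_Y q(y) \log p(y|x)\, dy.
\end{equation*}
So the first step is simply to observe that maximizing $p(y \stoch D|x)$ over $x$ is equivalent to maximizing the cross-term $\int_Y q(y) \log p(y|x)\, dy$.

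Next, I would write out the KL divergence and split it into two pieces:
\begin{equation*}
\mathrm{KL}(q(y) \,\|\, p(y|x)) = \int_Y q(y) \log q(y)\, dy - \int_Y q(y) \log p(y|x)\, dy.
\end{equation*}
The first term is the negative differential entropy of $q$, which does not depend on $x$. Hence minimizing the KL over $x$ is equivalent to maximizing $\int_Y q(y) \log p(y|x)\, dy$, which by the previous step coincides with maximizing $p(y \stoch D|x)$. This establishes \eqref{eqn:prop-1}.

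For the ``in particular'' clause, I would invoke Gibbs' inequality (nonnegativity of KL divergence with equality iff the two densities agree almost everywhere). If $x^*$ satisfies $p(y|x^*) \equiv q(y)$, then $\mathrm{KL}(q(y) \,\|\, p(y|x^*)) = 0$, which is the global minimum of $\mathrm{KL}(q(y) \,\|\, p(y|x))$ over $x$, so by the first part $x^*$ attains the maximum of $p(y \stoch D|x)$.

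I do not anticipate a real obstacle here: the only mild technical point is ensuring the integral $\int_Y q(y) \log p(y|x)\, dy$ is well-defined and finite on the domain over which the argmax is considered, which is implicit in the setting of Definition~\ref{dfn:stochastic-conditioning} (and handled uniformly for the discrete/measure-theoretic cases via the generalization referenced in the appendix). Conventions about $0 \log 0 = 0$ and the standard treatment of the support of $q$ relative to that of $p(\,\cdot\,|x)$ suffice to make every step rigorous.
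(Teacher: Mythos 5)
Your proof is correct and follows essentially the same route as the paper: both rewrite $\log p(y \stoch D|x) = \int_Y q(y)\log p(y|x)\,dy$ as the (constant-in-$x$) negative entropy of $q$ minus $\mathrm{KL}(q(y)\|p(y|x))$, so the argmax/argmin identity is immediate. Your explicit appeal to Gibbs' inequality for the ``in particular'' clause is a small addition the paper leaves implicit, but it is the same standard argument.
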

\begin{proof}
	We prove the proposition by re-expressing $\log p(y \stoch D|x)$ in
	terms of the negative KL divergence:
	\begin{align*}
			& \log p(y \stoch D|x) = \int_Y (\log p(y|x))\,q(y)dy \\
			& \ {} = \left(\int_Y (\log q(y)) \,q(y)dy\right) - \int_Y \left(\log \frac{q(y)}{p(y|x)}\right) q(y)dy \\
			& \ {} = \left(\int_Y (\log q(y))\, q(y)dy\right) - \mathrm{KL}(q(y)||p(y|x)).
	\end{align*}
	Since the first term in the last line does not depend on $x$, we have the equation \eqref{eqn:prop-1}.
\end{proof}
The proposition does not hold for the alternative definition $p_1(y \stoch D|x)$ in \eqref{eqn:alternative-stochastic-conditioning}. 
Even if there exists $x^*$ such that $p(y|x^*) \equiv q(y)$, all we can say about $p(y|\arg\max_x
p_1(y \stoch D|x))$ is just that it maximizes $\int_Y p(y|x) \,q(y)dy$, not that it is
$q(y)$.  For example, consider the finite discrete case and assume that there also exists 
$x^\dag$ with $p(y|x^\dag) = \mathrm{Dirac}(\arg \max_y q(y))$. Then 
$x^\dag$, not $x^*$, maximizes $p_1(y \stoch D|x)$.

The next theorem explains our setting formally and shows that $p(y \stoch D|x)$ 
has a finite normalization constant.
\begin{thm}
	Assume that the distribution $D_\theta$ is parameterized by $\theta \in \Theta \subseteq \mathbb{R}^p$. Let $q_\theta$ be its density. Then, $p(y \stoch D_\theta|x)$ has a finite normalization constant $C$ over
	$\mathcal{D} = \{D_\theta \mid \theta \in \Theta\}$ if the following uniform-bound condition holds: $\sup_{y \in Y} \int_\Theta q_\theta(y)\,d\theta < \infty$.
	Thus, in this case, $p(y \stoch D_\theta|x) / C$ is a conditional probability 
	density on $\mathcal{D}$.
	\label{thm:well-defined-likelihood}	
\end{thm}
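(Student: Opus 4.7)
The plan is to bound the normalization constant $C = \int_\Theta p(y \stoch D_\theta|x)\,d\theta$ by a finite quantity using Jensen's inequality followed by Tonelli's theorem. Writing out the definition,
\begin{equation*}
C = \int_\Theta \exp\!\left(\int_Y \log p(y|x)\, q_\theta(y)\,dy\right) d\theta.
\end{equation*}

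First, I would apply Jensen's inequality to the convex function $\exp$ (equivalently, the geometric-mean vs.\ arithmetic-mean inequality): since $q_\theta$ is a probability density,
\begin{equation*}
\exp\!\left(\int_Y \log p(y|x)\, q_\theta(y)\,dy\right) \le \int_Y p(y|x)\, q_\theta(y)\,dy.
\end{equation*}
This turns the problematic exponential-of-integral into a simple integral against $q_\theta$.

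Next, since the integrand is non-negative, Tonelli's theorem lets me swap the order of integration:
\begin{equation*}
C \le \int_\Theta \int_Y p(y|x)\, q_\theta(y)\,dy\,d\theta = \int_Y p(y|x) \left(\int_\Theta q_\theta(y)\,d\theta\right) dy.
\end{equation*}
Now the uniform-bound hypothesis gives $M := \sup_{y\in Y} \int_\Theta q_\theta(y)\,d\theta < \infty$, so the inner integral is bounded by $M$ pointwise. Using that $p(\cdot|x)$ is a probability density and integrates to $1$ over $Y$, I conclude $C \le M \cdot 1 = M < \infty$. Dividing $p(y \stoch D_\theta|x)$ by $C$ therefore yields a bona fide conditional probability density on $\mathcal{D}$.

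The only step that requires any care is Jensen's inequality, but its one-line application to $\exp$ against the probability measure $q_\theta(y)\,dy$ is standard, and the Tonelli swap is immediate from non-negativity, so I do not anticipate a genuine obstacle. The whole argument is short; the crucial insight is simply that the geometric-mean form of $p(y \stoch D|x)$ is automatically dominated by the arithmetic-mean alternative $p_1(y \stoch D|x)$, and the latter is trivially finite under the stated hypothesis.
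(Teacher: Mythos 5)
Your proof is correct and follows essentially the same route as the paper's: Jensen's inequality on $\exp$, a swap of the order of integration, and the uniform bound combined with $\int_Y p(y|x)\,dy = 1$. The only (harmless) difference is that you invoke Tonelli rather than Fubini to justify the swap, which is if anything the more precise citation given the non-negative integrand.
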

\begin{proof} 
	Let $C = \int_\Theta p(y \stoch D_\theta|x)\,d\theta$. Then,
	\begin{equation}
		C = \int_\Theta \exp \left( \int_Y (\log p(y|x))\, q_\theta(y)dy \right) d\theta.
	\end{equation}
	We compute a finite upper bound of $C$ as follows:
	\begin{align*}
			C 
			& \le^1 \int_\Theta \int_Y \exp(\log p(y|x))\, q_\theta(y)dy d\theta \\
			& {} = \int_\Theta \int_Y p(y|x) \,q_\theta(y)dy d\theta \\
			& {} =^2 \int_Y \left( \int_\Theta q_\theta(y)\, d\theta\right) p(y|x) dy \\
			& {} \leq \left(\sup\limits_{y' \in Y} \int_\Theta q_\theta(y')\,d\theta\right) \cdot \left(\int_Y  p(y|x)dy\right) \\
			& {} =  \sup\limits_{y' \in Y} \int_\Theta q_\theta(y')\,d\theta <^3 \infty.
	\end{align*}
	Here $\le^1$ is by Jensen's inequality, $=^2$ follows from Fubini's theorem, and $<^3$ uses the uniform-bound condition. 
\end{proof}

Let us illustrate the restriction  on the set of distributions $\mathcal{D}$
imposed by 
the uniform-bound condition in Theorem~\ref{thm:well-defined-likelihood}. 
The set $\mathcal{D} = \{\mathrm{Normal}(\theta, 1)  \mid \theta \in \mathbb{R}\}$ of normal distributions with the fixed variance $1$ meets the condition, so that when normalized, $p(y \stoch D|x)$ becomes a probability
density over $\mathcal{D}$. However, if we permit the variance to vary, the resulting set
$\mathcal{D}' = \{ \mathrm{Normal}(\theta,\sigma^2) \mid \theta \in \mathbb{R}, \sigma^2 \in (0,\infty)\}$ does not satisfy
the condition. 
Appendix~\ref{sec:stochastic-conditioning-generalization} contains another
set that violates a measure-theoretic generalization of our condition 
(described in Appendix~\ref{sec:stochastic-conditioning-generalization} as well) 
and, furthermore, does not have a normalization constant for
some choice of $p$.

The next proposition shows that stochastic conditioning
generalizes conventional conditioning on a value. It uses a
non-density version of
Definition~\ref{dfn:stochastic-conditioning} where the integral
over $q(y)dy$ is understood as the integral over the
distribution (i.e., probability measure) $D$.
\begin{pro}
	When observing a distribution reduces to
	observing a single value, $D = \mathrm{Dirac}(y)$,
	conditioning on a distribution reduces to conventional
	conditioning on a value: $p(\mathrm{Dirac}(y)|x) = p(y|x)$.
	\label{pro:dirac}
\end{pro}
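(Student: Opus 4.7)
The plan is to substitute $D = \mathrm{Dirac}(y_0)$ directly into the measure-theoretic form of Definition~\ref{dfn:prob-D-given-x}, as the proposition itself instructs, and then evaluate the resulting integral using the defining property of the Dirac measure.

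More concretely, I would first rewrite the definition in its non-density form:
\begin{equation*}
p(y \stoch D | x) = \exp\left( \int_Y (\log p(y|x)) \, D(dy) \right),
\end{equation*}
which is legitimate because a density $q$ for $D$ is not available when $D$ is a Dirac, but the integral against the probability measure $D$ is still well defined (assuming the usual measurability of $y \mapsto \log p(y|x)$ at the chosen $x$).

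Next, I would invoke the basic property of the Dirac measure: for any measurable function $f$, $\int_Y f(y)\, \mathrm{Dirac}(y_0)(dy) = f(y_0)$. Applying this with $f(y) = \log p(y|x)$ gives $\int_Y (\log p(y|x))\, \mathrm{Dirac}(y_0)(dy) = \log p(y_0|x)$, and exponentiating yields $p(\mathrm{Dirac}(y_0) | x) = \exp(\log p(y_0|x)) = p(y_0 | x)$, which is exactly the claim.

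There is essentially no hard step here; the only minor subtlety is the shift from the density-based Definition~\ref{dfn:prob-D-given-x} to its measure-theoretic counterpart, which is necessary because $\mathrm{Dirac}(y_0)$ is singular with respect to Lebesgue measure. Once that substitution is accepted (and the excerpt already licenses it), the result follows from a one-line computation. No additional assumptions on $p(y|x)$ are needed beyond finiteness of $\log p(y_0|x)$, which is implicit in the claim that $p(y_0|x)$ is a well-defined likelihood value at the observed point.
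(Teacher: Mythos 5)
Your proof is correct and follows exactly the route the paper intends: the paper states Proposition~\ref{pro:dirac} without an explicit proof precisely because, once Definition~\ref{dfn:prob-D-given-x} is read in its measure-theoretic form $p(y \stoch D|x) = \exp\bigl(\int_Y (\log p(y|x))\, D(dy)\bigr)$ (as in Appendix~\ref{sec:stochastic-conditioning-generalization}), the result is the one-line Dirac-integral computation you give. Your remark about needing the non-density version because $\mathrm{Dirac}(y_0)$ is singular with respect to Lebesgue measure matches the paper's own caveat preceding the proposition.
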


\section{Inference}
\label{sec:inference}

Algorithms for deterministic conditioning cannot be applied
without modification to probabilistic programs with  stochastic
conditioning.  One approach is to rely on nested Monte Carlo
estimation~\citep{RCY+18}. However, probabilistic programs with
stochastic conditioning constitute an important special case of
nested models, and it is possible to leverage properties of such
programs to apply a wider class of inference algorithms. In our
setting, $\log p(y \stoch D|x)$ is easy to estimate, and $p(y \stoch D|x)$ can be
estimated in a bias-adjusted manner, hence inference algorithms
using these estimates can be applied effectively.


In particular, Definition~\ref{dfn:prob-D-given-x} implies that an unbiased
Monte Carlo estimate of log likelihood $\log p(y \stoch D|x)$ is available based on samples
$y_i \sim D$ through which $D$ is observed:
\begin{equation}
	\log p(y \stoch D|x) \approx \frac 1 N \sum\nolimits_{i=1}^N \log p(y_i|x)
	\label{eqn:log-p-mc-stochastic}
\end{equation}
Another setting in which an unbiased Monte Carlo estimate of log
likelihood is available is subsampling for inference in models
with tall data~\cite{KCW14,BDH14,BDH17,MA14,QVK+18,QKV+19,DQK+19}. In
models considered for subsampling, $K$ observations $y_1, y_2,
..., y_K$ are conditionally independent given $x$:
\begin{equation}
	p(y_1, y_2, ..., y_K|x) = \prod\nolimits_{i=1}^K p(y_i|x)
\end{equation}
Most inference algorithms require evaluation of likelihood $p(y_1, y_2,
..., y_K|x)$, which is expensive if $K$ is large. For example,
in importance sampling, the likelihood is involved in the
computation of importance weights. In many Markov chain Monte Carlo
methods, the ratio of likelihoods of the proposed and the
current state is a factor in the Metropolis-Hastings acceptance
rate. Subsampling replaces $p(y_1, y_2, ..., y_K|x)$ by
an estimate based on $N$ samples $y_{i_1}, y_{i_2}, ...,
y_{i_N}$, $N < K$, which results in an unbiased Monte Carlo
estimate of log likelihood:
\begin{equation}
	\log p(y_1, y_2, ..., y_K|x) \approx \frac K N \sum\nolimits_{j=1}^N \log p(y_{i_j}|x)
	\label{eqn:log-p-mc-subsampling}
\end{equation}
The only difference between \eqref{eqn:log-p-mc-stochastic} and
\eqref{eqn:log-p-mc-subsampling} is in factor $K$, and inference
algorithms for subsampling can be applied to stochastic
conditioning with minor modifications.  

A simple bias-adjusted likelihood estimate $\hat p(x, y \stoch
D)$, required for the computation of the weights in importance
sampling as well as of the acceptance ratio in pseudo-marginal
Markov chain Monte Carlo~\cite{AR09}, can be computed based on
\eqref{eqn:log-p-mc-stochastic}~\cite{CD99,NFW12,QVK+18}.
Stochastic gradient-based inference
algorithms~\cite{CFG14,MCF15,HBW+13,RGB14,KTR+17} rely on an
unbiased estimate of the gradient of log likelihood, which is
trivially obtained by differentiating both sides
of~\eqref{eqn:log-p-mc-stochastic}.

We implemented inference in probabilistic programs with
stochastic conditioning for Infergo~\cite{T19}.
To facilitate support for stochastic conditioning in
other probabilistic programming systems, we provide details on
likelihood estimation and some possible adaptations of inference
algorithms to stochastic conditioning, as well as pointers to
alternative adaptations in the context of subsampling, in
Appendix~\ref{app:algorithms}.

\section{Case Studies}
\label{sec:case-studies}

In the case studies, we explore several problems cast as
probabilistic programs with stochastic conditioning. We place $y
\stochsim D$ above a rule to denote that distribution $D$ is
observed through $y$ and is otherwise unknown to the model, as
in~\eqref{eqn:notation-sampling}.  Some models are more natural
to express in terms of the joint probability that they compute
than in terms of distributions from which $x$ is drawn and $y$
is observed. In that case, we put the expression for the joint
probability $p(x, y)$ under the rule, as
in~\eqref{eqn:notation-scoring}.\\
\begin{minipage}{0.55\linewidth}
\vspace{-0.5\baselineskip}
\begin{equation}
	\begin{aligned}
		y & \stochsim D \\ \midrule
		x & \sim \textit{Prior} \\
		y\vert x & \sim \textit{Conditional}(x)
	\end{aligned}
	\label{eqn:notation-sampling}
\end{equation}
\end{minipage}
\begin{minipage}{0.44\linewidth}
\vspace{-0.5\baselineskip}
\begin{equation}
	\begin{aligned}
		y & \stochsim D \\ \midrule
		p & (x, y) = ...
	\end{aligned}
	\label{eqn:notation-scoring}
\end{equation}
\end{minipage}

The code and data for the case studies are provided in
repository~\url{https://bitbucket.org/dtolpin/stochastic-conditioning}.

\subsection{Inferring the Accuracy of Weather Forecast}
\label{sec:commute}

A person commutes to work either by
motorcycle or, on rainy days, by taxi. When the weather is good,
the motorcycle ride takes $15\pm 2$ minutes via a highway. If rain is
expected, the commuter takes
a taxi, and the trip takes $30\pm 4$ minutes, because of crowded roads
which slow down a four-wheeled vehicle. Sometimes, however, the
rain catches the commuter in the saddle, and 
the commuter rides slowly and carefully through rural roads,
arriving at $60\pm 8$ minutes. Given weather observations and
trip durations, we want to estimate the accuracy of rain
forecasts, that is, the probability of the positive forecast on
rainy days $p_t$ (true positive) and on dry days $p_f$ (false
positive).

The problem is represented by the following model:
\begin{align}
	\vspace{-0.5\baselineskip}
    \label{eqn:commute-genmod}
		\nonumber
		p_r, p_t, p_f& \sim \textrm{Beta}(1, 1)\\
		\var{rain}\vert p_r & \sim \textrm{Bernoulli}(p_r) \\ \nonumber
        \var{willRain}\vert p_t, p_f, \var{rain} & \sim \begin{cases}
					 & \!\!\!\!\!\!\textrm{Bernoulli}(p_t)\ \textrm{if}\ \var{rain} \\
            & \!\!\!\!\!\!\textrm{Bernoulli}(p_f)\ \textrm{otherwise}
        \end{cases} \\ \nonumber
        \var{duration}\vert \var{rain}, \var{willRain} & \sim \begin{cases}
                         & \!\!\!\!\!\!\textrm{Normal(30, 4)}\ \textrm{if}\ \var{willRain} \\
                         & \!\!\!\!\!\!\textrm{Normal(15, 2)}\ \textrm{if}\ \lnot \var{rain} \\
					     & \!\!\!\!\!\!\textrm{Normal(60, 8)}\ \textrm{otherwise}
        \end{cases}
\end{align}
Model~\eqref{eqn:commute-genmod} can be interpreted as either a simulator
that draws samples of (rain, duration) given $p_r$, $p_t$, and
$p_f$, or as a procedure that computes the conditional
probability of (rain, duration) given $p_r$, $p_t$, and $p_f$.
We use the simulator interpretation to generate synthetic
observations for 30 days and $p_r=0.2$, $p_t=0.8$, $p_f=0.1$.
The conditional probability interpretation lets us write down a
probabilistic program for posterior inference of $p_t$ and $p_f$
given observations. 

If, instead of observing (rain, duration) simultaneously,
we observe weather conditions and trip durations
separately and do not know correspondence between them
(a common situation when measurements are collected
by different parties), we can still
write a conventional probabilistic program conditioned on the
Cartesian product of weather conditions
and trip durations, but the number of observations and,
thus, 
time complexity of inference becomes quadratic in
the number of days. 
In general, when a model is conditioned on the Cartesian product
of separately obtained observation sets, inference complexity
grows exponentially with the dimensionality of observations, and
inference is infeasible in problems with more than a couple of
observed features.

\begin{figure}
	\begin{subfigure}{0.495\linewidth}
		\includegraphics[width=0.95\linewidth]{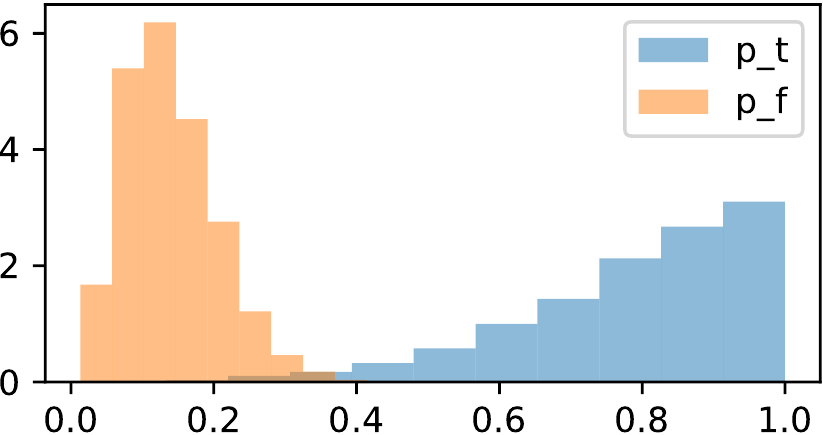}
		\caption{deterministic}
		\label{fig:commute-posteriors-deterministic}
	\end{subfigure}
	\begin{subfigure}{0.495\linewidth}
		\includegraphics[width=0.95\linewidth]{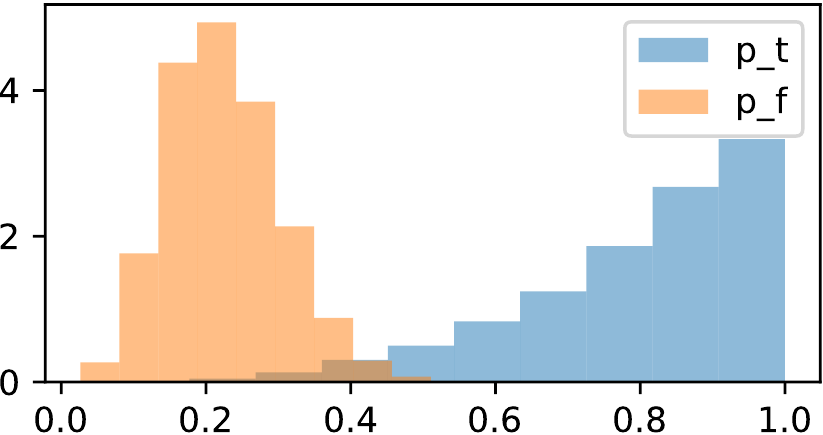}
		\caption{averaged}
		\label{fig:commute-posteriors-averaged}
	\end{subfigure}
	\begin{subfigure}{0.495\linewidth}
		\includegraphics[width=0.95\linewidth]{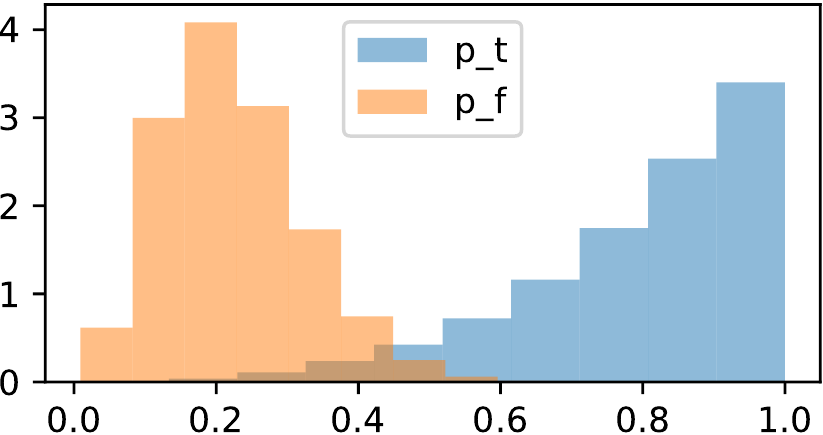}
		\caption{stochastic}
		\label{fig:commute-posteriors-stochastic}
	\end{subfigure}
	\begin{subfigure}{0.495\linewidth}
		\includegraphics[width=0.95\linewidth]{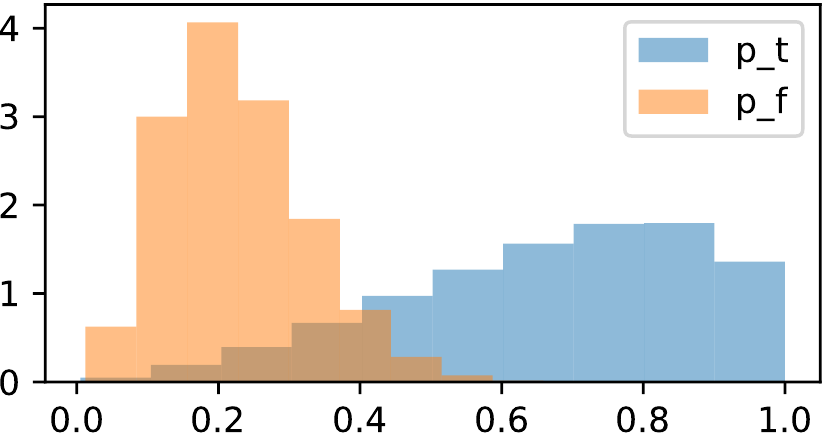}
		\caption{intensity}
		\label{fig:commute-posteriors-intensity}
	\end{subfigure}
	\vspace{-0.5\baselineskip}
    \caption{Commute to work: posteriors of $p_t$ and $p_f$ for
	each of the four models. Despite being exposed to partial
	information only, models with stochastic conditioning let us
	infer informative posteriors.}
	\vspace{-\baselineskip}
    \label{fig:commute-posteriors}
\end{figure}

Alternatively, we can draw rain and duration from the
observation sets randomly and independently, and stochastically
condition on $D = \var{Rains} \times \var{Durations}$:
\begin{equation}
	\vspace{-0.5\baselineskip}
    \begin{aligned}
            & \var{rain}, \var{duration} \stochsim \var{Rains} \times \var{Durations} \\ \midrule
		& ...
    \end{aligned}
    \label{eqn:commute-genmod-sc}
\end{equation}
One can argue that the probabilistic program for the case of
independent sets of observations of rain and duration can still
be implemented with linear complexity by noting that the domain
of rain contains only two values, true and false, and
analytically averaging the evidence over rain.  However, such
averaging is often impossible. Consider a variant of the
problem in which the duration of a motorcycle trip in rain
depends on rain intensity.  Stochastic conditioning, along with
inference algorithms that use a small number of samples to
estimate the log likelihood (magnitude or gradient), lets us
preserve linear complexity in the number of
observations~\cite{DPD15,BDH17}.

We fit the model using stochastic gradient Hamiltonian Monte
Carlo and used $10\,000$ samples to approximate the posterior.
Figure~\ref{fig:commute-posteriors} shows marginal posteriors of
$p_t$ and $p_f$ for each of the four models, on the same
simulated data set.  Posterior distributions should be the same
for the analytically averaged and stochastic models. The
deterministic model is exposed to more information
(correspondence between rain occurrence and trip duration).
Hence, the posterior distributions are more sharply peaked. The
stochastic model with observation of intensity should be less
confident about $p_t$, since now the observation of a motorcycle
trip duration slowed down by rain is supposed to come from a
distribution conditioned on rain intensity.

\subsection{Estimating the Population of New York State}
\label{sec:case-studies-population}
This case study is inspired by~\citet{R83}, also appearing as
Section~7.6 in~\citet{GCS+13}. The original 
case 
study evaluated
Bayesian inference on the problem of estimating the total
population of 804 municipalities of New York state based on a
sample of 100 municipalities. Two samples were given, with
different summary statistics, and power-transformed normal model
was fit to the data to make predictions consistent among the
samples. The authors of the original 
study apparently had
access to the full data set (population of each of 804
municipalities).  However, only summary description of the
samples appears in the publication: mean, standard
deviation, and quantiles (Table~\ref{tab:nypopu-data}). We
show how such summary description can be used to perform
Bayesian inference, with the help of stochastic
conditioning.

\begin{table}
	\centering
	\caption{Summary statistics for populations of
	municipalities in New York State in 1960; all 804 municipalities
	and two random samples of 100.
	From~\citet{R83}.}
	\label{tab:nypopu-data}
	\vspace{-0.5\baselineskip}
	\begin{tabular}{l r r r}
		& Population & Sample 1  & Sample 2  \\ \hline
total    &13,776,663   &1,966,745  &3,850,502 \\
mean     &17,135      &19,667    &38,505 \\
sd       &139,147     &142,218   &228,625 \\
lowest   &19         &164      &162 \\
5\%      &336        &308      &315 \\
25\%     &800        &891      &863 \\
median   &1,668       &2,081     &1,740 \\
75\%     &5,050       &6,049     &5,239 \\
95\%     &30,295      &25,130    &41,718 \\
highest  &2,627,319    &1,424,815  &1809578
	\end{tabular}
	\vspace{-\baselineskip}
\end{table}

The original case study in~\citet{R83} started with comparing
normal and log-normal models, and finally fit a truncated three-parameter
power-transformed normal distribution to the data, which helped
reconcile conclusions based on each of the samples while
producing results consistent with the total population. Here,
we use a model with log-normal sampling distribution, the normal
prior on the mean, based on the summary statistics,
and the improper uniform prior on the log of the variance.
 To complete the model, we stochastically condition on the
piecewise-uniform distribution $D$ of municipality populations 
according to the quantiles:
\begin{align}
	\vspace{-0.5\baselineskip}
	\label{eqn:nypopu} \nonumber
        & \ y_{1\ldots n} \stochsim \var{Quantiles} \\ \midrule \nonumber
        & \ m\! \sim\! \mathrm{Normal}\Big(\var{mean}, {\var{sd}}/ {\sqrt n}\Big),\,\log s^2\!\sim\!\mathrm{Uniform}(-\infty,\!\infty) \\
        & \ \sigma = \sqrt{\log \left(s^2/m^2 + 1\right)},\quad\mu  = \log m - {\sigma^2} / 2 \\ \nonumber
        & \ y_{1\ldots n}\vert m,s^2 \sim  \mathrm{LogNormal}(\mu, \sigma)
\end{align}

\begin{figure}
	\centering
	\includegraphics[width=0.95\linewidth]{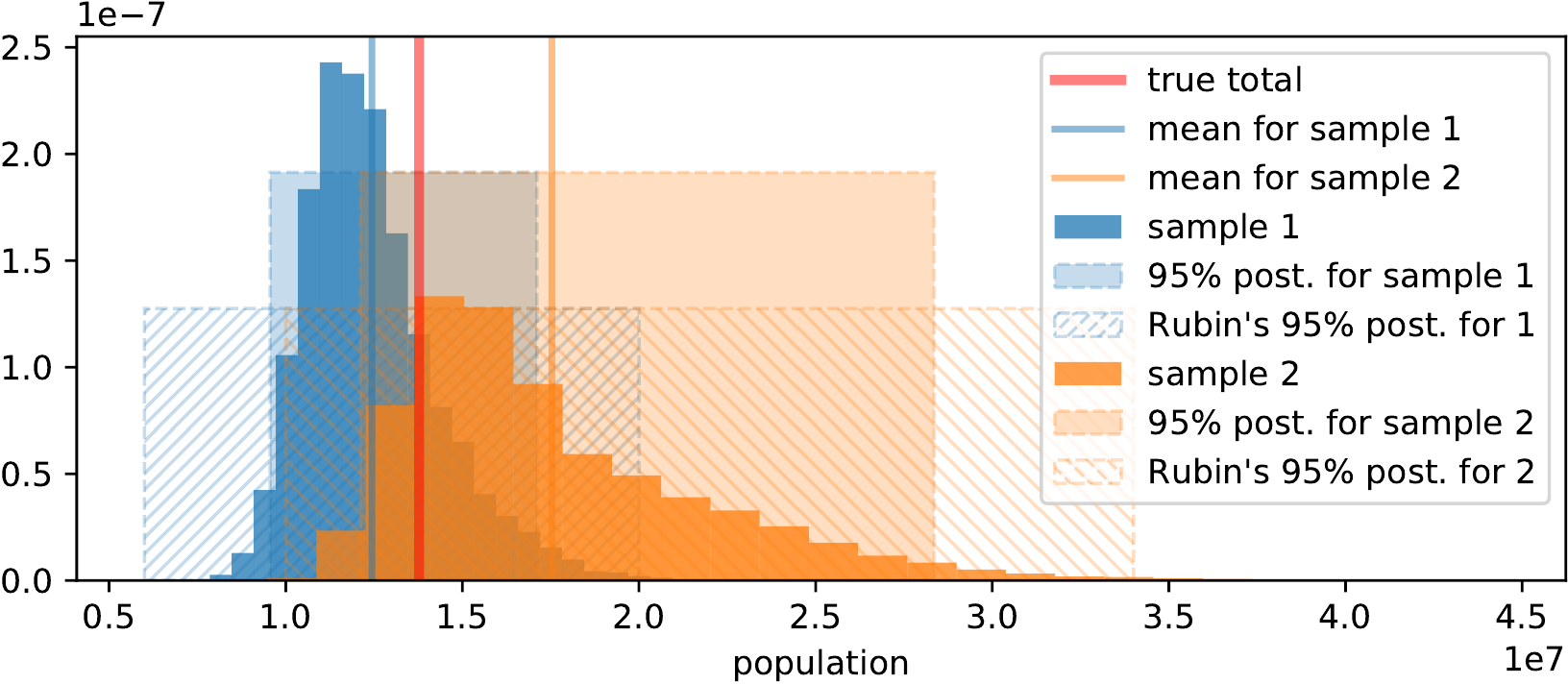}
	\vspace{-0.5\baselineskip}
	\caption{Estimating the population of NY state. 95\%
	intervals inferred from the summary statistics include the
	true total, and are tighter than Rubin's results.}
	\vspace{-\baselineskip}
	\label{fig:nypopu}
\end{figure}

As in Section~\ref{sec:commute}, we fit the model using
stochastic gradient HMC and used $10\,000$
samples to approximate the posterior.  We then used 
$10\,000$ draws with replacement of $804$-element sample sets
from the predictive posterior to estimate the total population.
The posterior predictive distributions of the total population
from both samples are shown in Figure~\ref{fig:nypopu}. The 95\%
intervals inferred from the summary statistics, $[9.6\times
10^6, 17.2 \times 10^6]$ for sample 1, $[12.1\times 10^6, 28.1
\times 10^6]$ for sample 2, cover the true total $13.8\times
10^6$, and are tighter than the best intervals based on the full
samples reported by \citet{R83}, $[6 \times 10^6, 20 \times
10^6]$ for sample 1 and $[10 \times 10^6, 34\times 10^6]$ for
sample 2.

\subsection{The Sailing Problem}

\begin{figure}
	\begin{subfigure}{0.495\linewidth}
		\centering
		\includegraphics[width=0.8\linewidth]{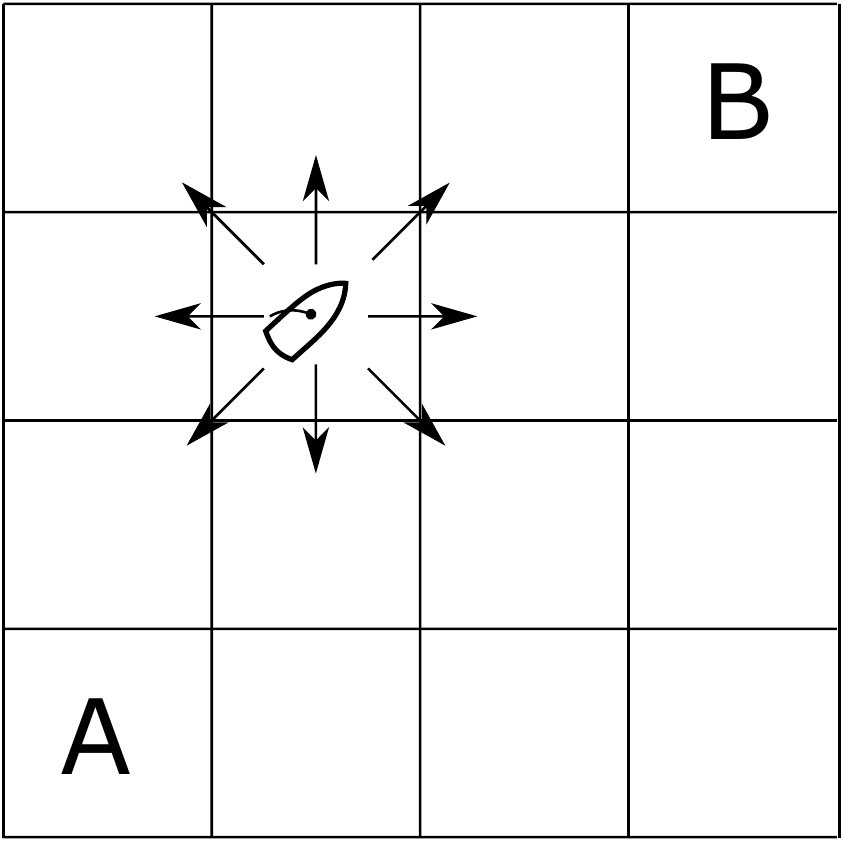}
		\caption{lake}
		\label{fig:sailing-lake}
	\end{subfigure}
	\begin{subfigure}{0.495\linewidth}
		\centering
		\includegraphics[width=0.8\linewidth]{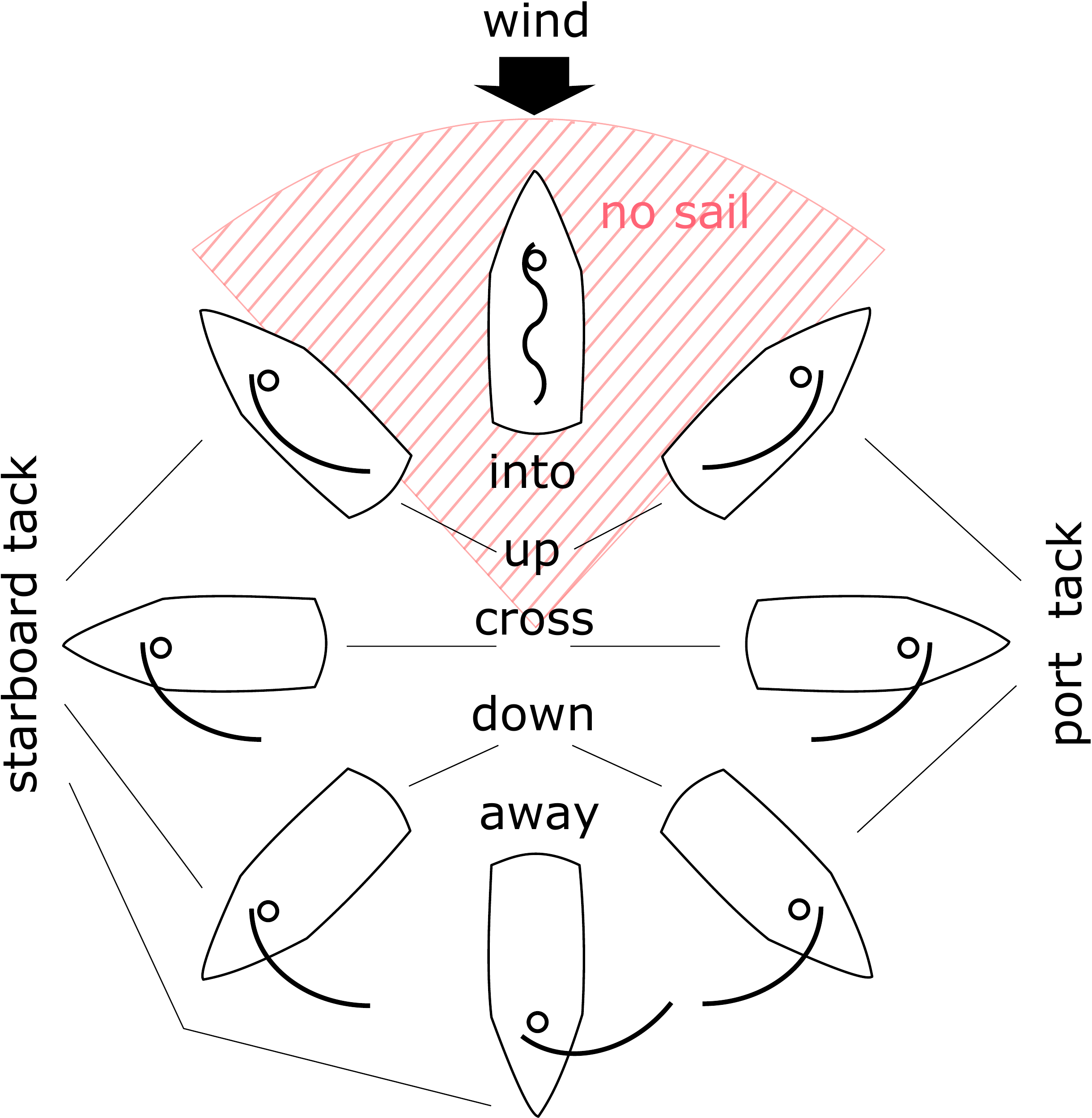}
		\caption{points of sail}
		\label{fig:sailing-points}
	\end{subfigure}
	\vspace{-0.5\baselineskip}
	\caption{The sailing problem}
	\vspace{-\baselineskip}
	\label{fig:sailing}
\end{figure}

The sailing problem (Figure~\ref{fig:sailing}) is a popular
benchmark problem for search and planning~\cite{PG04,KS06,TS12}.
A sailing boat must travel
between the opposite corners A and B of a square lake of a
given size. At each step, the boat can \textit{head} in 8
directions (\textit{legs}) to adjacent squares
(Figure~\ref{fig:sailing-lake}). The
unit distance cost of movement depends on the wind
(Figure~\ref{fig:sailing-points}), which can
also blow in 8 directions. There are five relative
boat and wind directions and associated costs: \textit{into},
\textit{up}, \textit{cross}, \textit{down}, and \textit{away}. 
The cost of sailing into the wind is prohibitively high, upwind
is the highest feasible, and away from the wind is the lowest.
The side of the boat off which the sail is hanging is called the
\textit{tack}. 
When the angle between the boat and the wind changes sign, the
sail must be \textit{tacked} to the opposite tack,
which incurs an additional \textit{tacking delay} cost.  The
objective is to find a policy that minimizes the expected travel
cost. The wind is assumed to follow a random walk, either
staying the same or switching to an adjacent
direction, with a known probability.

For any given lake size, the optimal policy can be found using
value iteration~\cite{B57}. The optimal policy is non-parametric: it
tabulates the leg for each combination of location, tack, and
wind. In this case study, we learn a simple parametric policy,
which chooses a leg that maximizes the sum of the leg
cost and the remaining travel cost after the leg, estimated as
the Euclidean distance to the goal multiplied by the average
unit distance cost:
\begin{equation}
	\vspace{-0.5\baselineskip}
	\begin{aligned}
		\var{leg}  = & \arg \min{}_{\var{leg}} \big[ \var{cost}(\var{tack}, \var{leg}, \var{wind}) + \\ 
		 & \var{unit-cost}\cdot\var{distance}(\var{next-location}, \var{goal})\big]
	\end{aligned}
	\label{eqn:sailing-policy}
\end{equation}

The average unit distance cost is the policy variable which we
infer. Model~\eqref{eqn:sailing} formalizes our setting.
Stochastic conditioning on $D=\var{RandomWalk}$ models non-determinism in
wind directions.
\begin{equation}
	\vspace{-0.5\baselineskip}
	\begin{aligned}
                & \qquad\qquad\quad\ \, \var{wind-history} \stochsim \var{RandomWalk} \\ \midrule
                & p(\var{wind-history}, \var{unit-cost}) = {}
                \\
                & \ \, \frac{1}{Z} \exp\Big(\frac {- \var{travel-cost}(\var{wind-history},\var{unit-cost})} {\var{lake-size} \cdot \var{temperature}}\Big)
	\end{aligned}
	\label{eqn:sailing}
\end{equation}
Under policy~\eqref{eqn:sailing-policy}, the boat trajectory and
the travel cost are determined by the wind history and the unit cost.
The joint probability of the wind history and the unit cost is given by
the Boltzmann distribution of trajectories with the travel cost
as the energy, a common physics-inspired choice in stochastic
control and policy search~\cite{K07,WGR+11,MPT+16}. The
temperature is a model parameter: the lower the temperature is,
the tighter is the concentration of policies around the optimal
policy.  A uniform prior on the unit cost, within a feasible
range, is implicitly assumed. If desirable, an informative
prior can be added as a factor depending on the unit cost.

\begin{table}
	\caption{Sailing problem parameters}
	\vspace{-0.5\baselineskip}
	\label{tab:sailing-parameters}
	\setlength\tabcolsep{3pt}
	\centering
	\begin{tabular}{c c c c c c | c c c}
		\multicolumn{6}{c|}{cost} & \multicolumn{3}{c}{wind probability} \\
		into & up & cross & down & away & delay & same & left & right \\ \hline
		$\infty$ & 4  & 3 & 2 & 1 & 4 & 0.4 & 0.3 & 0.3
	\end{tabular}
	\vspace{-\baselineskip}
\end{table}

The model parameters (cost and wind change
probabilities), same as  in~\citet{KS06,TS12}, are shown in
Table~\ref{tab:sailing-parameters}.
We fit the model using pseudo-marginal
Metropolis-Hastings~\cite{AR09} and used $10\,000$ samples to
approximate the posterior. The inferred unit and expected travel
costs are shown in Figure~\ref{fig:sailing-posterior}.
Figure~\ref{fig:sailing-posterior-unit} shows the posterior distribution
of the unit cost, for two 
temperatures. For all lake
sizes in the experiment (25, 50, 100), the optimal unit cost,
corresponding to the mode of the posterior, is $\approx
3.5$--$3.9$.  Distributions for lower temperatures are tighter
around the mode.  Figure~\ref{fig:sailing-posterior-travel} shows the
expected travel costs, with the expectations estimated both over
the unit cost and the wind. The 95\% posterior intervals are
shaded. The inferred travel costs are compared to the travel
costs of the optimal policy (the dashed line of the same color)
and of the greedy policy (the dotted line of the same color),
according to which the boat always heads in the direction of
the steepest decrease of the distance to the goal. One can see
that the inferred policies attain a lower expected travel cost
than the greedy policy and become closer to the optimal policy
as the temperature decreases.

\begin{figure}
	\begin{subfigure}{\linewidth}
		\centering
		\includegraphics[width=0.95\linewidth]{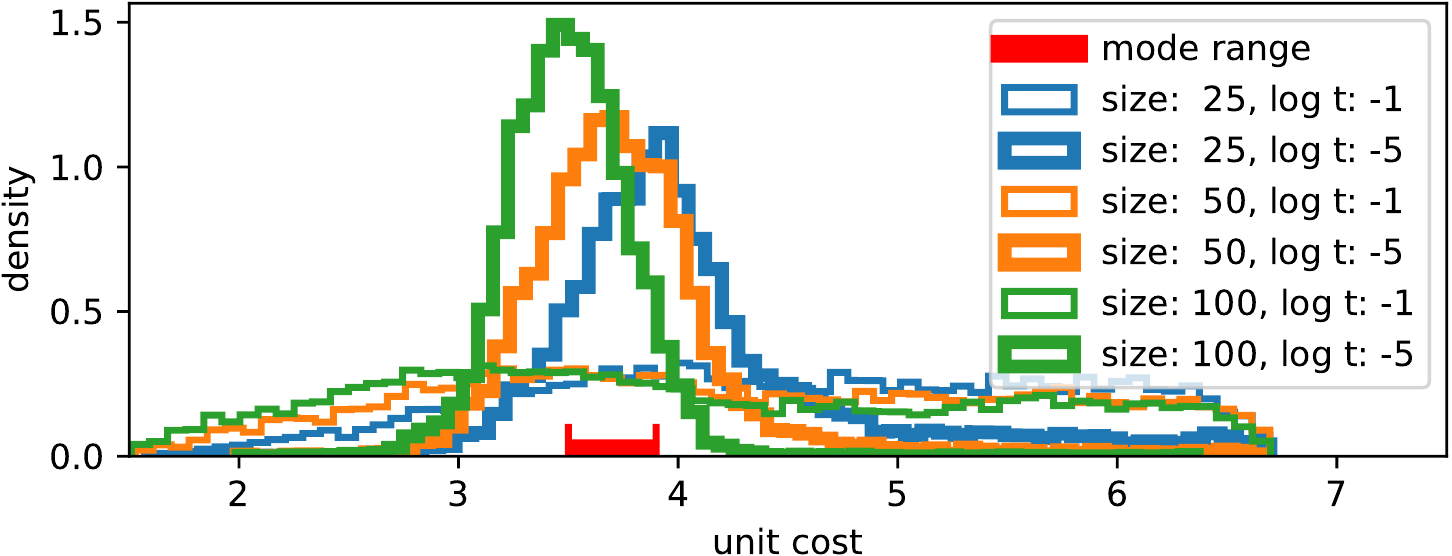}
		\caption{unit cost}
		\label{fig:sailing-posterior-unit}
	\end{subfigure}
	\begin{subfigure}{\linewidth}
		\centering
		\includegraphics[width=0.95\linewidth]{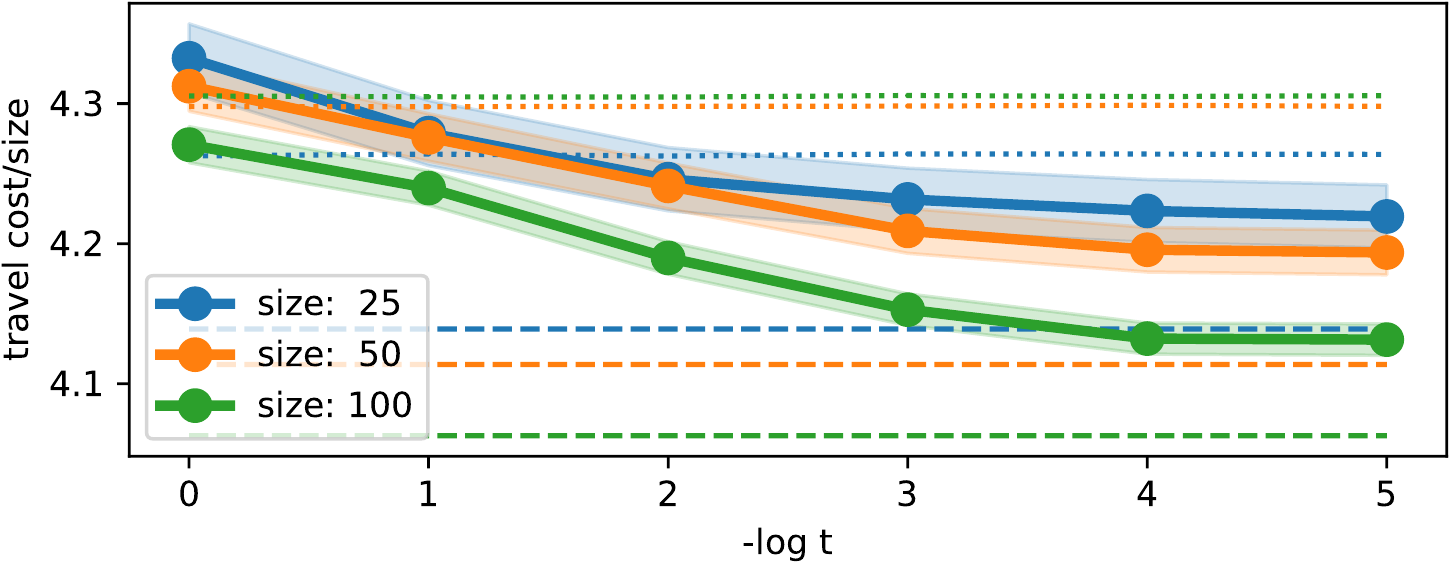}
		\caption{expected travel cost}
		\label{fig:sailing-posterior-travel}
	\end{subfigure}
	\vspace{-0.5\baselineskip}
	\caption{The sailing problem. The optimum unit cost is
	$\approx 3.5$--$3.9$. The dashed lines
	are the expected travel costs of the optimal policies, dotted
	--- of the greedy policy.}
	\vspace{-\baselineskip}
	\label{fig:sailing-posterior}
\end{figure}

\section{Related Work}

Works related to this research belong to several interconnected
areas: non-determinism in probabilistic programs,  nesting of
probabilistic programs, inference in nested statistical models,
and conditioning on distributions.

Stochastic conditioning can be viewed as an expression of
non-determinism with regard to the observed variate. The problem
of representing and handling non-determinism in
probabilistic programs was raised in~\citet{GHNR14}, as an
avenue for future work. Non-determinism arises, in particular,
in application of probabilistic programming to policy search in
stochastic domains. \citet{MPT+16} introduce a policy-search
specific model specification and inference scheme based on
black-box variational inference. We suggest, and show in a case
study, that policy search in stochastic domains can be cast as
inference in probabilistic programs with stochastic
conditioning. 

It was noted that probabilistic programs, or queries, can be
nested, and that nested probabilistic programs are able to
represent models beyond those representable by flat
probabilistic programs. \citet{SG14} describe how probabilistic
programs can represent nested conditioning as a part of the
model, with examples in diverse areas of game theory, artificial
intelligence, and linguistics. \citet{SMW18} apply nested
probabilistic programs to reasoning about autonomous agents.
Some probabilistic programming languages such as Church
\citep{GMR+08}, WebPPL \citep{GS15}, Anglican \citep{TMY+16},
and Gen \citep{CSL+19} support nesting of probabilistic
programs.  Stochastic conditioning can be, in principle,
represented through nesting, however nesting in general incurs
difficulties in inference~\citep{RCY+18,R18}\nocite{SYW+16}.
Stochastic conditioning, introduced in this work, allows both
simpler specification and more efficient inference, eliminating
the need for nesting in many important cases.

Conditioning of statistical models on distributions or
distributional properties is broadly used in machine
learning~\citep{CG01,KW19,GPM+14,MSJ+15,BCJ+19}. Conditioning on
distributions represented by samples is related to subsampling
in deep probabilistic programming~\citep{THS+17,TFP18,BCJ+19}.
Subsampling used with stochastic variational
inference~\citep{RGB14} can be interpreted as a special case of
stochastic conditioning. \citet{TZM+19} approach the problem of
conditioning on distributions by extending probabilistic
programming language \textsc{Omega} with constructs for
conditioning on distributional properties such as expectation or
variance. This work takes a different approach by generalizing
deterministic conditioning on values to stochastic conditioning
on distributions, without the need to explicitly compute or
estimate particular distributional properties, and leverages
inference algorithms developed in the context of
subsampling~\cite{KCW14,BDH14,BDH17,MA14,QVK+18,QKV+19,DQK+19}
for efficient inference in probabilistic programs with
stochastic conditioning.

There is a connection between stochastic conditioning and
Jeffrey's soft evidence~\citep{J90}. In soft evidence, the
observation is uncertain; any one out of a set of observations
could have been observed with a certain known probability. A
related concept in the context of belief networks is virtual
evidence~\citep{P88}. \citet{CD03} demonstrate that Jeffrey's
soft evidence and Pearl's virtual evidence are different
formulations of the same concept.  \citet{BDP+13,DLB16,J18}
elaborate on connection between soft and virtual evidence and
their role in probabilistic inference.  In probabilistic
programming, some cases of soft conditioning~\cite{WVM14,GS15}
can be interpreted as soft evidence.  In this work, the setting
is different: a distribution is observed, and the observation is
certain.

\section{Discussion}

In this work, we introduced the notion of stochastic
conditioning. We described kinds of problems for which
deterministic conditioning is insufficient, and showed on case
studies how probabilistic programs with stochastic conditioning
can be used to represent and efficiently analyze such problems.
We believe that adoption of stochastic conditioning in
probabilistic programming frameworks will facilitate convenient
modeling of new classes of problems, while still supporting
robust and efficient inference.  The idea of stochastic
conditioning is very general, and we believe our work opens up a
wide array of new research directions because of this.  Support
for stochastic conditioning in other existing probabilistic
programming languages and libraries is a direction for future
work.  While we provide a reference implementation, used in the
case studies, we believe that stochastic conditioning should
eventually become a part of most probabilistic programming
frameworks, just like other common core concepts.


\clearpage
\bibliography{refs}
\bibliographystyle{icml2021}

\clearpage
\appendix

\section{Measure-Theoretic Formalization of Stochastic Conditioning}
\label{sec:stochastic-conditioning-generalization}

Although stochastic conditioning is defined in terms of the density $q$ of the distribution $D$, its key idea does not depend on $q$. In fact, we have already explained informally how stochastic conditioning and our results can be developed even when the density $q$ does not exist, as in the case of Dirac distributions. Also, Proposition~\ref{pro:dirac} assumes this general development. In this section, we spell out this informal explanation, and describe the measure-theoretic formalization of stochastic conditioning.

We start by changing Definitions~\ref{dfn:stochastic-conditioning} and \ref{dfn:prob-D-given-x} such that $D$ is not required to have a density with respect to the Lebesgue measure, and the conditional density $p(y \stoch D|x)$ is defined for such $D$.
\begin{dfn} 
	A probabilistic model with stochastic conditioning is a tuple $(p(x, y), D)$ where 
	\begin{itemize}
		\item $p(x, y)$ is the joint probability density of random
			variable $x$ and observation $y$, and it is factored into
			the product of the prior $p(x)$ and the conditional probability $p(y|x)$ (i.e., $p(x, y)=p(x)p(y|x)$);
		\item $D$  is the distribution (i.e., probability measure) from which observation $y$ is sampled.
		\end{itemize}
	\label{dfn:stochastic-conditioning-generalization}
\end{dfn}
\begin{dfn}
	The conditional density $p(y \stoch D|x)$ of $D$ given $x$ is
	\begin{equation}
		p(y \stoch D|x) = \exp \left( \int_Y (\log p(y|x))\, D(dy) \right)
		\label{eqn:prob-D-given-x0-generalization}
	\end{equation}
	where $D(dy)$ indicates that the integral over $Y$ is taken with 
	respect to the distribution $D$.
	\label{dfn:prob-D-given-x-generalization}
\end{dfn}

To explain where the term ``density'' in Definition~\ref{dfn:prob-D-given-x-generalization} comes from, we recall the standard setup of 
the work on random distributions, which studies distributions over distributions.\footnote{A brief yet good exposition on this topic can be found in Appendix A of \citet{GhosalBayesianBook2017}.} The setup over random distributions on $Y \subseteq \mathbb{R}^m$ is the measurable space $(\mathcal{D},\Sigma)$ where $\mathcal{D}$ is 
the set of distributions over $Y$ and $\Sigma$ is the smallest $\sigma$-field 
generated by the family
\[
\Big\{\{D \mid D(A) < r\} \;\Big|\; \text{measurable $A \subseteq Y$ and $r \in \mathbb{R}$}\Big\}.
\]
The next theorem generalizes Theorem~\ref{thm:well-defined-likelihood}. In a setting that covers both continuous and discrete cases, with or without densities, the theorem describes when $p(y \stoch D|x)$ has a finite normalization constant.
\begin{thm}
	Assume that we are given a distribution $D_\theta$ parameterized by $\theta \in \Theta \subseteq \mathbb{R}^p$
	such that $D$ is a probability kernel from $\Theta$ to $Y$, and the following $\mu_x$ is a well-defined
	unnormalized distribution (i.e., measure) over $\Theta$: for all measurable subsets $B$ of $\Theta$,
	\begin{align*}
		\mu_x(B) 
		& {} = \int_B p(y \stoch D_\theta | x)\, d\theta 
		\\
		& {} = \int_B \exp \left(\int_Y (\log p(y|x)) \,D_\theta(dy)\right) d\theta.
	\end{align*}
	Let $\nu_x$ be the push-forward of $\mu_x$ along the function
	$\theta \longmapsto D_\theta$ from $\Theta$ to $\mathcal{D}$. The unnormalized distribution $\nu_x$
	has a finite normalization constant $C$ (i.e., $\nu_x(\mathcal{D}) = C < \infty$) if there exists $C' < \infty$
	such that for all measurable subsets
	$A$ of $Y$,
	\begin{equation}
		\label{eqn:measure-bound-requirement}
		\int_\Theta D_\theta(A)\, d\theta \leq \left(C' \cdot \int_A dy\right).
	\end{equation}
	\label{thm:measure}	
\end{thm}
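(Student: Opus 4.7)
The plan is to mirror the proof of Theorem~\ref{thm:well-defined-likelihood}, working with measures throughout and using the integral bound~\eqref{eqn:measure-bound-requirement} in place of the supremum bound from the density case. First, I would use the push-forward definition to reduce the claim on $\nu_x(\mathcal{D})$ to one on $\mu_x(\Theta)$: because every $D_\theta$ lies in $\mathcal{D}$, the preimage of $\mathcal{D}$ under the map $\theta\mapsto D_\theta$ is all of $\Theta$, and so $C = \nu_x(\mathcal{D}) = \mu_x(\Theta) = \int_\Theta \exp\bigl(\int_Y (\log p(y|x))\,D_\theta(dy)\bigr)\,d\theta$.

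Second, I would apply Jensen's inequality inside the exponential. Since $D_\theta$ is a probability measure and $\exp$ is convex, $\exp\bigl(\int_Y (\log p(y|x))\,D_\theta(dy)\bigr) \le \int_Y p(y|x)\,D_\theta(dy)$, giving $C \le \int_\Theta \int_Y p(y|x)\,D_\theta(dy)\,d\theta$. Swapping the two integrals by Tonelli's theorem (the integrand is non-negative, and the inner integral is a measurable function of $\theta$ because $D$ is a probability kernel), the right-hand side equals $\int_Y p(y|x)\,\lambda(dy)$, where $\lambda$ is the measure on $Y$ defined by $\lambda(A) := \int_\Theta D_\theta(A)\,d\theta$.

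Finally, condition~\eqref{eqn:measure-bound-requirement} says precisely $\lambda \le C'\cdot\mathrm{Leb}$ as measures on $Y$. Extending this set-level inequality to non-negative measurable integrands by the usual simple-function approximation and monotone convergence yields $\int_Y p(y|x)\,\lambda(dy) \le C' \int_Y p(y|x)\,dy = C'$, since $p(\cdot|x)$ integrates to $1$ in $y$. Hence $C \le C' < \infty$, as claimed. The main obstacle, modest compared to the original proof, is exactly this lift from the set-inequality in \eqref{eqn:measure-bound-requirement} to the corresponding functional inequality against $p(y|x)$; everything else is a mechanical translation of the earlier Jensen--Fubini argument into the measure-theoretic setting, with the probability-kernel assumption supplying the joint measurability required for push-forward and Tonelli.
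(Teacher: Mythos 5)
Your proposal is correct and follows essentially the same route as the paper's own proof: reduce $\nu_x(\mathcal{D})$ to $\mu_x(\Theta)$ via the push-forward, apply Jensen's inequality to pull $\exp$ inside the integral against the probability measure $D_\theta$, swap the order of integration, and invoke the bound \eqref{eqn:measure-bound-requirement}. Your treatment is in fact slightly more careful than the paper's, which writes the final step as an equality where only an inequality is justified and leaves implicit the lift from the set-level domination $\lambda \le C'\cdot\mathrm{Leb}$ to the corresponding inequality for integrals of non-negative functions; you supply that standard monotone-class argument explicitly.
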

Before proving the theorem, we make two comments. First,
when $D_\theta$ is defined in terms of a density $q_\theta$, the condition \eqref{eqn:measure-bound-requirement} in the theorem 
is implied by the condition in Theorem~\ref{thm:well-defined-likelihood}:
\[
\sup\limits_{y \in Y} \int_\Theta q_\theta(y) d\theta \leq C'.
\]
The implication is shown below:
\begin{align*}
		& \int_\Theta D_\theta(A)\,d\theta 
		=
		\int_\Theta \int_A q_\theta(y) dy d\theta
		=
		\int_A \int_\Theta q_\theta(y) d\theta dy
		\\
		& 
		\qquad\qquad
		{} \leq
		\int_A \sup\limits_{y' \in Y}\left(\int_\Theta q_\theta(y') d\theta\right) dy
		\leq 
		\left(C' \cdot \int_A dy\right).
\end{align*}
Second, when $\lambda_x$ is the push-forward of the Lebesgue measure
along $\theta \longmapsto D_\theta$, our $p(y \stoch D|x)$ is the density of $\nu_x$ with respect
to $\lambda_x$. This is why we called $p(y \stoch D|x)$ conditional density.

\begin{proof} 
	The theorem claims that
	$C = \nu_x(\mathcal{D})$ is finite. But $C = \mu_x(\Theta)$ by the definition
	of the push-forward measure, and so it suffices to show the finiteness of $\mu_x(\Theta)$. Note
	\begin{equation}
		\mu_x(\Theta) = \int_\Theta \exp \left( \int_Y (\log p(y|x)) D_\theta(dy) \right) d\theta.
	\end{equation}
	We compute a finite bound of $C = \mu_x(\Theta)$ as follows:
	\begin{equation}
	\begin{aligned}
			C & \le^1 \int_\Theta \int_Y \Big(\exp(\log p(y|x))\Big)\, D_\theta(dy) d\theta \\
			& = \int_\Theta \int_Y p(y|x)\, D_\theta(dy) d\theta \\
			& {} =^2 \left(C' \cdot \int_Y  p(y|x)\,dy\right) = C' < \infty
	\end{aligned}
	\label{eqn:prop-3-proof}
	\end{equation}
	where $\le^1$ is by Jensen's inequality and $=^2$ uses the assumption of the theorem.
\end{proof}

Besides $\{\mathrm{Normal}(\theta, 1)  \mid \theta \in \mathbb{R}\}$ that we discussed already after  Theorem~\ref{thm:well-defined-likelihood},
the set  $\{\mathrm{Dirac}(\theta) \mid \theta \in \mathbb{R}\}$ satisfies the condition 
\eqref{eqn:measure-bound-requirement} in Theorem~\ref{thm:measure}. Thus,
$p(y \stoch D|x)$ can be normalized to a distribution (i.e., a probability measure) in both cases.
However, $p(y \stoch D|x)$ cannot be normalized over the space  
$\{ \beta \cdot \mathrm{Dirac}(0) + (1-\beta) \cdot \mathrm{Dirac}(\theta) \mid \theta
\in \mathbb{R}\}$ for $\beta \in (0,1)$, which consists of the mixtures of two Dirac distributions.
The condition \eqref{eqn:measure-bound-requirement}  in Theorem~\ref{thm:measure} does not hold. In fact,
if $p(0|x) > 0$, the normalization constant of $\nu_x$ in the theorem is infinite.

\section{Inference algorithms}
\label{app:algorithms}

A simple \textbf{bias-adjusted likelihood estimate} $\hat p(x, y
\stoch D)$, required for the computation of the weights in
importance sampling as well as of the acceptance ratio in
pseudo-marginal Markov chain Monte Carlo~\cite{AR09}, can be
computed based on \eqref{eqn:log-p-mc-stochastic} as
follows~\cite{CD99,NFW12,QVK+18}. Under the conditions of the
central limit theorem, the distribution of 
\[
\frac 1 N \sum_{j=1}^N \log p(x, y_j)
\]
becomes similar to the normal
distribution 
\[
\mathrm{Normal}\Big(\mu\,{=}\,\underset{y \sim D}{\mathbb{E}}[\log p(x, y)], \sigma^2\,{=}\,\frac 1 N {\underset{y \sim D}{\mathrm{\mathbb{V}ar}}[\log p(x, y)]} \Big)
\]
as $N \to \infty$. Correspondingly, the distribution of 
\[
        \exp \left(\frac 1 N \sum_{j=1}^N \log  p(x, y_j)\right)
\]
and the log-normal distribution with the same parameters become
similar under the same asymptotics. But the mean of the
log-normal distribution is $\exp(\mu + \frac {\sigma^2} 2)$.
Thus, we can construct a bias-adjusted estimate as
\begin{align}
		\label{eqn:p-x-D-estimate}
		\nonumber 
		m & {} = \frac 1 N \!\sum_{j=1}^N \log p(x, y_j),
		\\[0.5ex]
		\nonumber
		s^2 & {} = \frac 1 {N{-}1} \! \sum_{j=1}^N (\log p(x, y_j){-} m)^2, 
		\\[0.5ex] 
		\hat p(x, y \stoch D) & {} = \exp(\mu)
		\\
		\nonumber & {} \approx  \mathbb{E}_{y_{1:N} \sim D^n}\!\!\left[\exp\!\left(\frac 1 N \sum\nolimits_{j=1}^N \log p(x, y_j)\right)\right] 
		\\
		\nonumber & \phantom{{}\approx{}} {} \times \exp\left(- \frac{\sigma^2}{2}\right) 
		\\
		\nonumber & \approx \exp\left(m - \frac{s^2}{2N}\right).
\end{align}

In \textbf{importance sampling}, $x_i$'s are drawn from a proposal
distribution $U$ with probability mass or density $u(x)$  and
weighted by the joint probability mass or density of $x$ and
observations.  In the case of stochastic conditioning, the
weight $w_i$ of $x_i$ is approximated as $\hat w_i$ using an
unbiased estimate $\hat p(x_i, D)$ such as
\eqref{eqn:p-x-D-estimate}.
\begin{equation}
	\hat w_{i}  = \frac{\hat p(x_i, D)}{u(x_i)} = \frac 1 {u(x_i)}\exp\left(m_i - \frac{s_i^2}{2N}\right).
	\label{eqn:w-i}
\end{equation}

\textbf{Markov chain Monte Carlo} algorithms are broadly applied to inference
in probabilistic programs, with Lightweight Metropolis-Hastings~\cite{WSG11}
as the simplest and universally applicable variant. Many MCMC
variants involve proposing a new state $x'$ from a proposal
distribution $U$ with probability mass or density $u(x'|x)$ and then
either accepting $x'$ or retaining $x$, with Metropolis-Hastings acceptance
ratio $\alpha$ based on the joint probability of $x'$ and observations:
\begin{equation}
	\alpha = \min \left\{1,\; \frac {u(x|x')} {u(x'|x)} \times \frac {p(x',y \stoch D)} {p(x, y \stoch D)}\right\}.
\end{equation}
Just like with importance sampling, $p(x, y \stoch D)$ cannot be computed
exactly for probabilistic programs with stochastic conditioning.
However, \citet{AR09} establish that the joint probability can
be replaced with an unbiased estimate without affecting the
stationary distribution of the Markov chain, resulting in
\textit{pseudo-marginal} MCMC. Pseudo-marginal MCMC allows
speeding up Monte Carlo inference by
subsampling~\cite{BDH17,QKV+19,DQK+19,QVK+18} and can be applied
to stochastic conditioning as well. The main challenge in
designing an efficient MCMC algorithm, for both subsampling and
stochastic conditiong, is constructing an unbiased low-variance
estimate of the joint probability. In a basic case,
\eqref{eqn:p-x-D-estimate} can be used as a bias-adjusted
estimate, resulting in the acceptance ratio $\hat \alpha$:
\begin{equation}
	\begin{aligned}
		\hat \alpha = {} 
		& \min\left\{1,\; \frac {u(x|x')} {u(x'|x)} \times \frac {\hat p(x', D)} {\hat p(x, y \stoch D)}\right\} 
		\\
		{} = {} 
		& \min\left\{1,\; \frac {u(x|x')} {u(x'|x)} \times  \exp \left( m' - m - \frac {s'^2 - s^2} {2N}\right)\right\}.
	\end{aligned}
	\label{eqn:pmmh-hat-alpha}
\end{equation}
Note that the same samples $y_1, y_2, ..., y_N$ should be used for estimating both
$m, s^2$ and $m', s'^2$~\cite{AR09}.

\textbf{Stochastic gradient Markov chain Monte Carlo}
(sgMCMC)~\citep{MCF15} can be used unmodified when the log probability is
differentiable with respect to $x$. sgMCMC uses an unbiased stochastic estimate
of the gradient of log probability density. Such estimate is
trivially obtained by drawing a single sample $y_1$ from $D$ and computing the
gradient of the log joint density of $x$ and $y$:
\begin{equation}
        \begin{aligned}
                \nabla_x \log p(x, y \stoch D) 
                & = \nabla_x \Big(\log\Big(p(x)\prod_{y \in Y} {p(y\vert x)^{q(y)dy}}\Big)\Big)
                \\
                & = \nabla_x \Big(\log\Big(\prod_{y \in Y} {p(x,y)^{q(y)dy}}\Big)\Big)
                \\
                & = \nabla_x \int_{y \in Y} q(y) \log p(x,y) dy
                \\
                & = \int_{y \in Y} q(y)\Big(\nabla_x \log p(x,y)\Big) dy
                \\
                & \approx \nabla_x \log p(x,y_1).
        \end{aligned}
\end{equation}

\textbf{Stochastic variational inference}
\citep{HBW+13,RGB14,KTR+17} requires a noisy estimate of the
gradient of the evidence lower bound (ELBO) $\mathcal{L}$. The
most basic approach is to use the score estimator that is
derived from the following equation:
\begin{equation}
	\nabla_\lambda \mathcal{L} = \mathbb{E}_{x \sim q(x\vert \lambda)} \left[(\nabla_\lambda \log q(x\vert \lambda))\left(\log \frac{p(x, y \stoch D)}{q(x\vert \lambda)}\right)\right]. 
	\label{eqn:nabla-elbo}
\end{equation}
As in the standard posterior inference setting, maximizing ELBO is equivalent to minimizing the KL divergence
from $q(x\vert \lambda)$ to $p(x\vert D)$. Substituting \eqref{eqn:prob-D-given-x0} into \eqref{eqn:nabla-elbo}, we obtain
\begin{equation}
\begin{aligned}
        \nabla_\lambda \mathcal{L} & 
                = \mathbb{E}_{x \sim q(x\vert \lambda)}\bigg[\nabla_\lambda \log q(x\vert \lambda)\Big(\log p(x) + {} \\
		& \hspace{4.5em}\int\limits_{y \in Y} \!\!q(y) \log p(y\vert x)dy - \log q(x\vert \lambda)\Big)\bigg]
        \\
        & 
        = \mathbb{E}_{x \sim q(x\vert \lambda)}\bigg[\int\limits_{y \in Y}\!\! \nabla_\lambda \log q(x\vert \lambda)\Big(\log p(x) + {} \\
		& \hspace{6.5em} \log p(y\vert x) - \log q(x\vert \lambda)\Big)q(y)dy\bigg] \\
        & 
        = \mathbb{E}_{(x,y) \sim q(x\vert \lambda) \times D}\bigg[\nabla_\lambda \log q(x\vert \lambda)\Big(\log p(x) + {} \\
		& \hspace{10.5em} \log p(y\vert x) - \log q(x\vert \lambda)\Big)\bigg]. 
\end{aligned}
	\label{eqn:nabla-elbo-p-x-y}
\end{equation}
Thus, $\nabla_\lambda \mathcal{L}$ can be estimated using Monte Carlo samples $x_s, y_s \sim q(x\vert \lambda) \times D$:
\begin{equation}
	\begin{aligned}
                \nabla_\lambda \mathcal{L} &\approx \frac 1 S \sum_{s=1}^S \nabla_\lambda \log q(x_s\vert \lambda)\Big(\log p(x_s) + {} \\
		& \hspace{6em} \log p(y_s\vert x_s) - \log q(x_s\vert \lambda)\Big),
	\end{aligned}
	\label{eqn:nabla-elbo-p-x-y-s-s}
\end{equation}
and stochastic variational inference can be
directly applied. In fact, \citet{MPT+16} use black-box
variational inference~\citep{RGB14} for a special case of stochastic
conditioning arising in policy search.
\end{document}